\pdfoutput=1

\documentclass[11pt]{article}

\usepackage{ACL2023}

\usepackage{times}
\usepackage{latexsym}

\usepackage[T1]{fontenc}

\usepackage[utf8]{inputenc}

\usepackage{microtype}

\usepackage{inconsolata}

\usepackage{booktabs}
\usepackage{siunitx}
\usepackage{multirow}
\usepackage{amsfonts}
\usepackage{amsmath}
\usepackage{xspace}
\usepackage{booktabs}
\usepackage{graphicx}
\usepackage{floatrow}
\usepackage{comment}
\usepackage{float}
\newfloatcommand{capbtabbox}{table}[][\FBwidth]

\usepackage{tikz}

\DeclareRobustCommand{\circled}[2]{\tikz[baseline=(char.base)]{\node[shape=circle, draw, inner sep=1pt, scale=0.825, fill=#2] (char) {#1};}}

\usepackage{cleveref}
\DeclareTextSymbolDefault{\ohorn}{T5}
\DeclareTextSymbolDefault{\uhorn}{T5}

\crefname{section}{\S}{\S\S}
\Crefname{section}{\S}{\S\S}
\crefname{table}{Tab.}{}
\crefname{figure}{Fig.}{}
\crefname{algorithm}{Algorithm}{}
\crefname{equation}{eq.}{eqs.}
\crefname{appendix}{App.}{}
\crefname{thm}{Theorem}{Theorems}
\crefname{prop}{Proposition}{Propositions}
\crefname{cor}{Corollary}{Corollaries}
\crefname{observation}{Observation}{Observations}
\crefname{assumption}{Assumption}{Assumptions}
\crefformat{section}{\S#2#1#3}

\usepackage{bm}
\usepackage{fix-cm}
\usepackage{amssymb}
\usepackage{amsthm}

\usepackage{enumerate}
\usepackage[shortlabels]{enumitem}
\usepackage{bbm}

\usepackage[disable]{todonotes}

\makeatletter
\newcommand*\iftodonotes{\if@todonotes@disabled\expandafter\@secondoftwo\else\expandafter\@firstoftwo\fi}  %
\makeatother

\definecolor{dandelion}{HTML}{FFD464}

\DeclareMathOperator*{\argmin}{argmin}
\DeclareMathOperator*{\argsup}{argsup}
\DeclareMathOperator*{\arginf}{arginf}
\DeclareMathOperator*{\Expect}{\mathbb{E}}
\DeclareMathOperator{\expect}{\mathbb{E}}

\theoremstyle{remark}

\newcommand{\bert}{BERT\xspace}

\newcommand{\defn}[1]{\emph{#1}}

\newcommand{\kl}[2]{\text{KL}(#1 \, \vert\vert \, #2)}

\newcommand{\kldist}{\text{D}\xspace}
\newcommand{\qprime}{q'\xspace}
\newcommand{\cS}{\complement{S}\xspace}
\newcommand{\an}{{a^{(n)}}\xspace}
\newcommand{\bn}{{b^{(n)}}\xspace}
\newcommand{\Sn}{{S^{(n)}}\xspace}
\newcommand{\Srn}{{S_r^{(n)}}\xspace}
\newcommand{\cSn}{{\cS^{(n)}}\xspace}
\newcommand{\ct}{\complement{t}\xspace}
\newcommand{\vwS}{{\vw_S}\xspace}
\newcommand{\vwcS}{{\vw_\cS}\xspace}

\newcommand{\vwcSn}{{\vw^{(n)}_\cSn}\xspace}

\newcommand{\qMRFL}{q^\text{\probMRFAbbrev}\xspace}
\newcommand{\qHCB}{q^\text{HCB}\xspace}
\newcommand{\qAG}{q^\text{AG}\xspace}
\newcommand{\qAGiter}[1]{q^\text{AG(#1)}\xspace}

\newcommand{\bertbasecirc}{\circled{B}{blue!20}}
\newcommand{\bertlargecirc}{\circled{L}{green!20}}

\newcommand{\bertbase}{\textrm{BERT}$_\textsc{base}$\xspace}
\newcommand{\bertlarge}{\textrm{BERT}$_\textsc{large}$\xspace}

\newcommand{\logitMRFAbbrev}{MRF$_\textsc{L}$\xspace}
\newcommand{\probMRFAbbrev}{MRF\xspace}

\newtheorem{proposition}{Proposition}[section]

\newcommand{\SetSize}[1]{\lvert #1\rvert}

\def\calD{{\mathcal{D}}}

\def\calV{{\mathcal{V}}}

\def\vw{{\mathbf{w}}}

\def\vtheta{{\boldsymbol{\theta}\xspace}}

\renewcommand{\complement}[1]{{\overline{#1}}}

\def\defequals{\triangleq}

\def\mask{\texttt{[MASK]}\xspace}
\def\eos{\texttt{[EOS]}\xspace}

\newcommand*{\nameadjunct}{\relax}
\makeatletter
\renewcommand*{\NAT@nmfmt}[1]{\NAT@up #1\nameadjunct}
\makeatother

\definecolor{naivecolorname}{rgb}{0.122, 0.466, 0.70}
\newcommand{\mlmC}{\textcolor{naivecolorname}{MLM}\xspace}

\definecolor{mrfcolorname}{rgb}{1.0, 0.5, 0.055}
\newcommand{\mrfC}{\textcolor{mrfcolorname}{MRF}\xspace}

\definecolor{mrflogitcolorname}{rgb}{0.17, 0.627, 0.17}
\newcommand{\mrfLogitC}{\textcolor{mrflogitcolorname}{MRF (Logit)}\xspace}

\definecolor{itercolorname}{rgb}{0.83, 0.153 0.153}
\newcommand{\agC}{\textcolor{itercolorname}{AG}\xspace}

\definecolor{hcbcolorname}{rgb}{0.58, 0.404, 0.747}
\newcommand{\hcbC}{\textcolor{hcbcolorname}{HCB}\xspace}

\title{
Deriving Language Models from Masked Language Models
}

\author{Lucas Torroba Hennigen \\
  Affiliation / Address line 1 \\
  Affiliation / Address line 2 \\
  Affiliation / Address line 3 \\
  \texttt{email@domain} \\\And
  Second Author \\
  Affiliation / Address line 1 \\
  Affiliation / Address line 2 \\
  Affiliation / Address line 3 \\
  \texttt{email@domain} \\}

 \author{
Lucas Torroba Hennigen~\;~ \hspace{1cm}
Yoon Kim~\;~
\vspace{1mm} \\
Massachusetts Institute of Technology \\ Computer Science and Artificial Intelligence Laboratory
\\
\normalsize 
\href{mailto:lucastor@mit.edu}{\texttt{lucastor@mit.edu}}~\;~ 
 \href{mailto:yoonkim@mit.edu}{\texttt{yoonkim@mit.edu}} 
} 

\begin{document}

\setlength{\abovedisplayskip}{4pt}
\setlength{\belowdisplayskip}{4pt}
\setlength{\abovedisplayshortskip}{4pt}
\setlength{\belowdisplayshortskip}{4pt}

\maketitle

\begin{abstract}
Masked language models (MLM) do not explicitly define a distribution over language, i.e., they are not language models \emph{per se}. However, recent work has implicitly treated them as such for the purposes of generation and scoring. This paper studies methods for deriving explicit joint distributions from MLMs, focusing on distributions over two tokens,
which makes it possible to calculate exact distributional properties. We find that an approach based on identifying joints whose conditionals are closest to those of the MLM works well and outperforms existing Markov random field-based approaches. We further find that this derived model's conditionals can even occasionally outperform the original MLM's conditionals.

\end{abstract}

\section{Introduction}

Masked language modeling has proven to be an effective paradigm for representation learning \citep{devlin2019bert,liu2019roberta,he2021deberta}.
However, unlike regular language models, masked language models (MLM) do not define an explicit joint distribution over language.
While this is not a serious limitation from a representation learning standpoint, having explicit access to joint distributions would be useful for the purposes of generation~\citep{ghazvininejad2019maskpredict}, scoring~\citep{salazar2020bertscore}, and  would moreover enable evaluation of MLMs on standard metrics such as perplexity.

Strictly speaking, MLMs \emph{do}  define a joint distribution over tokens that have been masked out. But they assume that the masked tokens are conditionally independent given the unmasked tokens---an assumption that clearly does not hold for language. How might we derive a language model from an MLM such that it does not make unrealistic independence assumptions? One approach is to use the set of the MLM's \emph{unary conditionals}---the conditionals that result from masking just a single token in the input---to construct a fully-connected Markov random field (MRF) over the input~\citep{wang-cho,goyal2021mrf}.
This resulting MRF no longer makes any independence assumptions. It is unclear, however, if this heuristic approach actually results in a good language model.\footnote{MRFs derived this way are still not language models in the strictest sense~\citep[e.g., see][]{leo2023measure-theory} because the probabilities of sentences of a given length sum to 1, and hence the sum of probabilities of all strings is infinite (analogous to left-to-right language models trained without an \eos token; \citealp{CHEN1998}). This can be  remedied by incorporating a distribution over sentence lengths.}

This paper adopts an alternative approach which stems from interpreting the unary conditionals of the MLM as defining a \defn{dependency network} \cite{heckerman2000dependency,yamakoshi-etal-2022-probing}.\footnote{Recent work by \citet{yamakoshi-etal-2022-probing} has taken this view, focusing on sampling from the dependency network as a means to \emph{implicitly} characterize the joint distribution of an MLM. Here we focus on an \emph{explicit} characterization of the joint.} Dependency networks specify the statistical relationship among variables of interest through the set of conditional distributions over each variable given its Markov blanket, which in the MLM case corresponds to all the other tokens. If the conditionals from a dependency network are \defn{compatible}, i.e., there exists a joint distribution whose conditionals  coincide with those of the dependency network's, then one can recover said joint using the Hammersley--Clifford--Besag~\citep[HCB;][]{besag1974conditionals} theorem.
If the conditionals are incompatible, then we can adapt approaches from statistics for deriving near-compatible joint distributions from incompatible conditionals~\citep[AG;][]{arnold1998algorithm}.

While these methods give statistically-principled approaches to deriving explicit joints from the MLM's unary conditionals, they are intractable to apply to derive distributions over full sequences.
We thus study a focused setting where it is tractable to compute the joints exactly, viz., the \defn{pairwise language model} setting where we use the MLM's unary conditionals of two tokens to derive a joint over these two tokens (conditioned on all the other tokens). Experiments under this setup reveal that AG method performs best in terms of perplexity, with the the HCB and MRF methods performing similarly. Surprisingly, we also find that the unary conditionals of the near-compatible AG joint occasionally have lower perplexity  than the original unary conditionals learnt by the MLM, suggesting that regularizing the conditionals to be compatible may be beneficial insofar as modeling the distribution of language.\footnote{Our code and data is available at: \url{https://github.com/ltorroba/lms-from-mlms}.}

\section{Joint distributions from MLMs}

Let $\calV$ be a vocabulary, $T$ be the text length, and  $\vw \in \calV^T$ be an input sentence or paragraph.
We are particularly interested in the case when a subset $S \subseteq [T] \defequals \{1, \ldots, T\}$
of the input $\vw$ is replaced with \mask tokens; in this case we will use the notation ${q}_{\{t\} | \cS}(\cdot \mid \vwcS)$
to denote the output distribution of the MLM at position $t \in S$,
where we mask out the positions in $S$, i.e., for all $k \in S$ we modify $\vw$ by setting $w_k = \mask$.
If $S = \{t\}$, then we call $q_{t|\ct} \defequals q_{\{t\}|\complement{\{t\}}}$ a \defn{unary conditional}.
Our goal is to use these conditionals to construct joint distributions $q_{S | \cS}(\cdot \mid \vwcS)$ for any $S$.

\paragraph{Direct MLM construction.}
The simplest approach is to simply mask out the tokens over which we want a joint distribution, and define it to be the product of the MLM conditionals,
\begin{align}
q^\text{MLM}_{S|\cS}(\vw_S \mid \vwcS) \defequals \prod_{i \in S} q_{\{i\}|\cS}(w_i \mid \vwcS).
\end{align}
This joint assumes that the entries of $\vwS$ are conditionally independent given $\vwcS$. Since one can show that MLM training is equivalent to learning the conditional marginals of language (\cref{app:mlm-objective-interpretation}), this can be seen as approximating conditionals with a (mean field-like) factorizable distribution.

\paragraph{MRF construction.} To address the conditional independence limitation of MLMs, prior work~\citep{wang-cho,goyal2021mrf} has proposed deriving joints by defining an MRF using the unary conditionals of the MLM.
Accordingly, we define
\begin{align}
    \qMRFL_{S|\cS}(\vwS \mid \vwcS) \propto \prod_{t \in S} {q}_{t|\ct}(w_t \mid \vw_\ct),
 \label{eq:mrf-local-defn}
\end{align}
which can be interpreted as a fully connected MRF, whose log potential is given by the sum of the unary log probabilities. One can similarly define a variant of this MRF where the log potential is the sum  of the unary \emph{logits}.
MRFs defined this way have a single fully connected clique and thus do not make any conditional independence assumptions.
However, such MRFs can have unary conditionals that deviate from the MLM's unary conditionals even if those are compatible (\cref{app:unfaithful-mrf}).
This is potentially undesirable since the MLM unary conditionals could be close to the true unary conditionals,\footnote{As noted by {\url{https://machinethoughts.wordpress.com/2019/07/14/a-consistency-theorem-for-bert/}}} which means the MRF construction could be worse than the original MLM in terms of unary perplexity.

\paragraph{Hammersley--Clifford--Besag construction.}
The Hammersley--Clifford--Besag theorem~\citep[HCB;][]{besag1974conditionals} provides a way of reconstructing a joint distribution from its unary conditionals.
Without loss of generality, assume that $S = \{1, \ldots, k\}$ for some $k \le T$.
Then given a \defn{pivot point} $\vw' = (w'_1, \ldots, w'_{k}) \in \calV^{k}$, we define
\begin{align}
    \label{eq:hcb}
    \qHCB_{S|\cS}(\vwS \mid \vwcS) \propto \prod_{t \in S} \frac{q_{t|\complement{t}}(w_t \mid \vw_{>t}, \vw'_{<t})}{q_{t|\complement{t}}(w'_t \mid \vw_{>t}, \vw'_{<t})},
\end{align}
where $\vw'_{<i} \defequals (w'_1, \ldots, w'_{i - 1})$, and similarly $\vw_{>i} \defequals (w_{i + 1}, \ldots, w_{T})$.
Importantly, unlike the MRF approach, if the unary conditionals of the MLM \emph{are} compatible, then HCB will recover the true joint, irrespective of the choice of pivot.

\paragraph{Arnold--Gokhale construction.}
If we assume that the unary conditionals are not compatible, then we can frame our goal as finding a near-compatible joint, i.e., a joint such that its unary conditionals are close to the unary conditionals of the MLM.
Formally, for any $S$ and fixed inputs $\vwcS$, we can define this objective as,
\begin{align}
\qAG_{S|\cS}(\cdot \mid \vwcS) = \argmin_{\mu} \sum_{t \in S} \sum_{\vw' \in \calV^{|S| - 1}} \hspace{-0.4cm} J(t, \vw'),
\end{align}
where $J(t, \vw')$ is defined as:
\begin{align*}
    \kl{q_{t|S \setminus \{t\},\cS}(\cdot \mid \vw', \vwcS)}{\mu_{t|S \setminus \{t\},\cS}(\cdot \mid \vw', \vwcS)}.
\end{align*}
We can solve this optimization problem using \citeposs{arnold1998algorithm} algorithm (\cref{app:ag-algorithm}).

\subsection{Pairwise language model} In language modeling we are typically interested in the probability of a sequence $p(\vw)$. However, the above methods are intractable to apply to full sequences (except for the baseline MLM). For example, the lack of any independence assumptions in the MRF means that the partition function requires full enumeration over $V^T$ sequences.\footnote{We also tried estimating the partition through importance sampling with GPT-2 but found the estimate
to be quite poor.} We thus focus our empirical study on the pairwise setting where $|S| = 2$.\footnote{Concurrent work by \citet{young2023compatibility} also explores the (in)compatibility of MLMs in the $|S| = 2$ case.} In this setting, we can calculate  $q_{S | \cS}(\cdot \mid \vwcS)$ with $O(V)$ forward passes of the MLM for all methods.

\section{Evaluation}
 We compute two sets of  metrics that evaluate the resulting joints in terms of (i) how good they are as probabilistic models of language and (ii) how faithful they are to the original MLM conditionals (which are trained to approximate the true conditionals of language, see \cref{app:mlm-objective-interpretation}).
Let $\calD = \{(\vw^{(n)}, S^{(n)})\}_{n=1}^N$ be a dataset where $\vw^{(n)}$ is an English sentence and $S^{(n)} = (a^{(n)}, b^{(n)})$ are the two positions being masked.
We define the following metrics to evaluate a distribution $\qprime$:
\paragraph{Language model performance.}
    We consider two performance metrics.
    The first is the pairwise perplexity (\textbf{P-PPL}) over two tokens,
    \begin{align*}
        \exp \hspace{-0.10cm} \left( \hspace{-0.10cm} \frac{-1}{2N} \hspace{-0.10cm} \sum_{n = 1}^N \log \qprime_{\an,\bn|\cSn}(w^{(n)}_\an, w^{(n)}_\bn \mid \vwcSn) \hspace{-0.15cm} \right)
    \end{align*}
    We would expect a good joint to obtain lower pairwise perplexity than the original MLM, which (wrongly) assumes conditional independence.
    The second is unary perplexity (\textbf{U-PPL}), 
    \begin{align*}
        \exp \hspace{-0.10cm} \Bigg( \hspace{-0.10cm} \frac{-1}{2N} \sum_{n = 1}^N \hspace{-0.35cm} \sum_{\substack{(i, j) \in \\ \{\Sn, \Srn\}}} \hspace{-0.5cm} \log \qprime_{i|j,\cSn}(w^{(n)}_i \mid w^{(n)}_j, \vwcSn) \hspace{-0.15cm} \Bigg)
    \end{align*}
    where for convenience we let $\Srn \defequals (b^{(n)}, a^{(n)})$ as the reverse of the masked positions tuple $\Sn$.
    Note that this metric uses the unary conditionals derived from the pairwise joint, i.e., $q'_{i|j,S}$, except in the MLM construction case which uses the MLM's original unary conditionals.
\paragraph{Faithfulness.} We also assess how faithful the new unary conditionals are to the original unary conditionals by calculating the average conditional KL divergence (\textbf{A-KL}) between them,
    \begin{align*}
        \sum_{n=1}^N \sum_{w' \in \calV} \frac{\kldist(\Sn, w', \vwcSn) + \kldist(\Srn, w', \vwcSn)}{2 N \SetSize{\calV}}.
    \end{align*}
    where we define $\kldist(S, w', \vwcS) \defequals \kl{q^{}_{a|b,\cS}(\cdot \mid w', \vwcS)}{\qprime_{a|b,\cS}(\cdot \mid w', \vwcS)}$ for $S = (a, b)$.
If the new joint is completely faithful to the MLM, this number should be zero.
The above metric averages the KL across the entire vocabulary $\mathcal{V}$, but in practice we may be interested in assessing closeness only when conditioned on the gold tokens. We thus compute a variant of the above metric where we only average over the conditionals for the gold token (\textbf{G-KL}):
\begin{align*}
    \sum_{n=1}^N \frac{\kldist(\Sn, w^{(n)}_{b^{(n)}}, \vwcSn) \hspace{-0.05cm} + \hspace{-0.05cm} \kldist(\Srn, w^{(n)}_{a^{(n)}}, \vwcSn)}{2 N}.
\end{align*}
This metric penalizes unfaithfulness in common contexts more than in uncommon contexts. Note that if the MLM's unary conditionals are compatible, then both the HCB and AG approach should yield the same joint distribution, and their faithfulness metrics should be zero.

\begin{table*}[t]
\footnotesize
\centering
\sisetup{table-alignment-mode = format}
\begin{tabular}{cll@{\hskip 0.75cm}SSS[table-format=1.3]S[table-format=1.3]@{\hskip 0.75cm}S@{\hskip 0.3cm}S[table-alignment-mode=none,table-column-width=30pt,table-number-alignment=right]S[table-format=1.3]S[table-format=1.3]}
\toprule
&         &             & \multicolumn{4}{c}{\hspace{-0.55cm}Random pairs} & \multicolumn{4}{c}{\hspace{-0.1cm}Contiguous pairs} \\
& Dataset & Scheme      & {U-PPL} & {P-PPL} & {A-KL} & {G-KL} & {U-PPL} & {P-PPL} & {A-KL} & {G-KL} \\
\midrule
\multirow{10}{*}{\bertbasecirc} & \multirow{5}{*}{SNLI} 
& MLM & 11.22 & 19.01 & 1.080 & 0.547 & 13.78 & 74.68 & 4.014 & 1.876 \\
& & \logitMRFAbbrev & 13.39 & 71.44 & 0.433 & 0.267 & 23.45 & 13568.17 & 1.543 & 0.607 \\
& & \probMRFAbbrev & 12.30 & 21.65 & 0.658 & 0.179 & 18.35 & 126.05 & 1.967 & 0.366 \\
& & HCB & 12.51 & 22.62 & 0.593 & 0.168 & 17.71 & 589.02 & 2.099 & 0.416 \\
& & AG & 10.76 & 12.68 & 0.007 & 0.085 & 13.26 & 21.59 & 0.018 & 0.181 \\
\cmidrule(lr){2-11}
& \multirow{5}{*}{XSUM}
& MLM & 4.88 & 6.12 & 0.404 & 0.227 & 4.91 & 39.33 & 4.381 & 2.128 \\
& & \logitMRFAbbrev & 5.17 & 9.12 & 0.148 & 0.085 & 6.55 & 2209.94 & 1.561 & 0.383 \\
& & \probMRFAbbrev & 5.00 & 6.23 & 0.262 & 0.049 & 5.53 & 47.62 & 2.242 & 0.185 \\
& & HCB & 5.08 & 6.21 & 0.256 & 0.052 & 6.46 & 174.32 & 2.681 & 0.328 \\
& & AG & 5.00 & 5.29 & 0.003 & 0.044 & 5.27 & 8.42 & 0.016 & 0.143 \\
\midrule
\multirow{10}{*}{\bertlargecirc} & \multirow{5}{*}{SNLI}
& MLM & 9.50 & 18.57 & 1.374 & 0.787 & 10.42 & 104.12 & 4.582 & 2.463 \\
& & \logitMRFAbbrev & 11.52 & 76.23 & 0.449 & 0.276 & 15.43 & 8536.92 & 1.470 & 0.543 \\
& & \probMRFAbbrev & 10.57 & 19.54 & 0.723 & 0.193 & 13.07 & 93.33 & 1.992 & 0.359 \\
& & HCB & 10.71 & 20.70 & 0.797 & 0.215 & 14.43 & 458.25 & 2.563 & 0.552 \\
& & AG & 8.57 & 10.11 & 0.007 & 0.097 & 9.64 & 15.64 & 0.019 & 0.173 \\
\cmidrule(lr){2-11}
& \multirow{5}{*}{XSUM}
& MLM & 3.80 & 5.67 & 0.530 & 0.413 & 3.91 & 103.86 & 5.046 & 3.276 \\
& & \logitMRFAbbrev & 3.94 & 7.06 & 0.156 & 0.068 & 4.62 & 1328.20 & 1.441 & 0.290 \\
& & \probMRFAbbrev & 3.87 & 4.94 & 0.322 & 0.036 & 4.16 & 36.66 & 2.258 & 0.145 \\
& & HCB & 3.91 & 5.14 & 0.346 & 0.059 & 5.67 & 164.15 & 2.954 & 0.400 \\
& & AG & 3.88 & 4.13 & 0.003 & 0.042 & 4.21 & 6.62 & 0.016 & 0.126 \\
\bottomrule
\end{tabular}
\caption{
Comparison of MRF, HCB and AG constructions on randomly sampled SNLI~\citep{snli} sentences and XSUM~\citep{xsum} summaries.
We apply the constructions to two MLMs: \bertbase (\bertbasecirc) and \bertlarge (\bertlargecirc).
We consider both masking tokens uniformly at random (Random pairs) and masking adjacent tokens uniformly at random (Contiguous pairs).
 For all metrics, lower is better.
\label{tab:unrestricted-results}
\vspace{-2mm}
}
\end{table*}
\subsection{Experimental setup} 
We calculate the above metrics on 1000 examples\footnote{Each example requires running the MLM over \num{28000} times, so it is expensive to evaluate on many more examples.} from  a natural language inference dataset~\citep[SNLI;][]{snli} and a summarization dataset~\citep[XSUM;][]{xsum}. We consider two schemes for selecting the tokens to be masked for each sentence: masks over two tokens chosen uniformly at random (\textbf{Random pairs}), and also over random \emph{contiguous} tokens in a sentence (\textbf{Contiguous pairs}).
Since inter-token dependencies are more likely to emerge when adjacent tokens are masked, the contiguous setup magnifies the importance of deriving a good pairwise joint.
In addition, we consider both \bertbase and \bertlarge (cased) as the MLMs from which to obtain the unary conditionals.\footnote{Specifically, we use the \href{https://huggingface.co/bert-base-cased}{bert-base-cased} and \href{https://huggingface.co/bert-large-cased}{bert-large-cased} implementations from HuggingFace~\citep{huggingface}.}
 For the AG joint, we run $t = 50$ steps of \citeposs{arnold1998algorithm} algorithm (\cref{app:ag-algorithm}), which was enough for convergence. For the HCB joint, we pick a pivot using the mode of the pairwise joint of the MLM.\footnote{We did not find HCB to be too sensitive to the pivot in preliminary experiments.}

\section{Results}

The results are shown in \cref{tab:unrestricted-results}.
Comparing the PPL's of \probMRFAbbrev and \logitMRFAbbrev (i.e., the MRF using logits), the former consistently outperforms the latter, indicating that using the raw logits generally results in a worse language model.
Comparing the MRFs to MLM, we see that the unary perplexity (U-PPL) of the MLM is lower than those of the MRFs, and that the difference is most pronounced in the contiguous masking case. 
More surprisingly, we see that the pairwise perplexity (P-PPL) is often (much) higher than the MLM's, even though the MLM makes unrealistic conditional independence assumptions.  These results suggest that the derived MRFs are in general worse unary/pairwise probabilistic models of language than the MLM itself,
implying that the MRF heuristic is inadequate (see \cref{app:mrf-underperformance} for a qualitative example illustrating how this can happen). Finally, we also find that the MRFs' unary conditionals are not faithful to those of the MRFs based on the KL measures.
Since one can show that the MRF construction can have unary conditionals that have nonzero KL to the MLM's unary conditionals even if they are compatible (\cref{app:unfaithful-mrf}), this gives both theoretical and empirical arguments against the MRF construction.

The HCB joint obtains comparable performance to \probMRFAbbrev in the random masking case. In the contiguous case, it exhibits similar failure modes as the MRF in producing extremely high pairwise perplexity (P-PPL) values.
The faithfulness metrics are similar to the MRF's, which suggests that the conditionals learnt by MLMs are incompatible.
The AG approach, on the other hand, outperforms the \logitMRFAbbrev, \probMRFAbbrev and HCB approaches in virtually all metrics. This is most evident in the contiguous masking case, where  AG  attains  lower pairwise perplexity than all models, including the MLM itself.
In some cases, we find that the AG model even outperforms the MLM in terms of unary perplexity, 
which is remarkable since the unary conditionals of the MLM were \emph{trained} to approximate the unary conditionals of language (\cref{app:mlm-objective-interpretation}).
This indicates that near-compatibility may have regularizing effect that leads to improved MLMs. Since AG was optimized to be near-compatible, its joints are unsurprisingly much more faithful to the original MLM's conditionals. However,  AG's G-KL tends to be on par with the other models, which suggests that it is still not faithful to the MLM in the contexts that are most likely to arise.
Finally, we analyze the effect of masked position distance on language modeling performance, and find that improvements are most pronounced when the masked tokens are close to each other (see \cref{app:distance-analysis}).

\section{Related work}
\label{sec:related-work}

\paragraph{Probabilistic interpretations of MLMs.}
In one of the earliest works about sampling from MLMs, \citet{wang-cho} propose to use unary conditionals to sample sentences. 
Recently \citet{yamakoshi-etal-2022-probing} highlight that, while this approach only constitutes a pseudo-Gibbs sampler, the act of re-sampling positions uniformly at random guarantees that the resulting Markov chain has a unique, stationary distribution~\citep{bengio-2013,bengio-2014}.
Alternatively, \citet{goyal2021mrf} propose defining an MRF from the MLM's unary conditionals, and sample from this via Metropolis-Hastings. 
Concurrently, \citet{young2023compatibility} conduct an empirical study of the compatibility of \bert's conditionals.

\paragraph{Compatible distributions.}
The statistics community has long studied the problem of assessing the compatibility of a set of conditionals \citep{arnold1989compatible,gelman1993,DBLP:journals/ma/WangK10,song2010}. \citet{arnold1998algorithm} and \citet{ARNOLD2002231} explore algorithms for reconstructing near-compatible joints from incompatible conditionals, which we leverage in our work. 
\citet{besag1974conditionals} also explores this problem, and defines a procedure (viz., eq. \ref{eq:hcb}) for doing so when the joint distribution is strictly positive and the conditionals are compatible. \citet{lowd2012} apply a version of HCB to derive Markov networks from incompatible dependency networks \citep{heckerman2000dependency}.

\section{Conclusion}
In this paper, we studied four different methods for deriving an explicit joint distributions from MLMs, focusing in the pairwise language model setting where it is possible to compute exact distributional properties.
We find that the Arnold--Gokhale (AG) approach, which finds a joint whose conditionals are closest to the unary conditionals of an MLM, works best.
Indeed, our results indicate that said conditionals can attain lower perplexity than the unary conditionals of the original MLM. It would be interesting to explore whether explicitly regularizing the conditionals to be compatible during MLM training would lead to better modeling of the distribution of language.

\section{Limitations}
Our study illuminates the deficiencies of the MRF approach and applies statistically-motivated approaches to craft more performant probabilistic models. However, it is admittedly not clear how these insights can  immediately be applied to improve downstream NLP tasks. We focused on models over pairwise tokens in order to avoid sampling and work with exact distributions for the various approaches (MRF, HCB, AG). However this limits the generality of our approach (e.g., we cannot score full sentences).
We nonetheless believe that our empirical study is interesting on its own and suggests new paths for developing efficient and faithful MLMs.

\section*{Ethics Statement}
We foresee no ethical concerns with this work.

\section*{Acknowledgements}
We thank the anonymous reviewers for their helpful comments.
This  research is supported in part by funds from the MLA@CSAIL initiative and MIT-IBM Watson AI lab.
LTH acknowledges support from the Michael Athans fellowship fund.

\bibliography{custom}
\bibliographystyle{acl_natbib}

\clearpage
\onecolumn

\appendix

\section{MLMs as learning conditional marginals}
\label{app:mlm-objective-interpretation}

One can show that the MLM training objective corresponds to learning to approximate the conditional marginals of language, i.e., the (single-position) marginals of language when we condition on any particular set of positions.
More formally, consider an MLM parameterized by a vector $\vtheta \in \Theta$ and some distribution $\mu(\cdot)$ over positions to mask $S \subseteq [T]$.
Then the MLM learning objective is given by:
\begin{align*}
    \hat{\vtheta} = \argsup_\vtheta \Expect_{S \sim \mu(\cdot)} \Expect_{\vw \sim p(\cdot)} \left[ \frac{1}{\SetSize{S}} \sum_{t \in S} \log q_{t | \cS} (w_t \mid \vwcS; \vtheta) \right], 
\end{align*}
where $p(\cdot)$ denotes the true data distribution. Analogously, let $p_{S|\cS}(\cdot \mid \vwcS)$ and $p_{\cS}(\cdot)$ denote the conditionals and marginals of the data distribution, respectively. 
Then the above can be rewritten as:
\begin{align*}
    \hat{\vtheta} &= \argsup_\vtheta \Expect_{S \sim \mu(\cdot)} \Expect_{\vw_{\cS} \sim p_{\cS}(\cdot)} \left[ \frac{1}{\SetSize{S}} \sum_{t \in S} \Expect_{\vw_S \sim p_{S|\cS}(\cdot)} \left[ \log q_{t | \cS} (w_t \mid \vwcS; \vtheta) \right] \right]  \\
    &= \arginf_\vtheta \Expect_{S \sim \mu(\cdot)} \Expect_{\vw_{\cS} \sim p_{\cS}(\cdot)} \left[ \frac{1}{\SetSize{S}} \sum_{t \in S} \kl{p_{t | \cS}(\cdot \mid \vwcS)}{q_{t | \cS}(\cdot \mid \vwcS; \vtheta)}  \right],
\end{align*}
Thus, we can interpret MLM training as learning to approximate the conditional marginals of language, i.e., $\forall S \subseteq [T]$ and $\forall t \in S$, in the limit we would expect that, for any observed context $\vwcS$, we have $q_{t \mid \cS}(\cdot \mid \vwcS) \approx p_{t \mid \cS}(\cdot \mid \vwcS)$.

\section{Unfaithful MRFs}
\label{app:unfaithful-mrf}

Here we show that even if the unary conditionals used in the MRF construction are compatible~\citep{arnold1989compatible},
the unary conditionals of the probabilistic model implied by the MRF construction can deviate (in the KL sense) from the true conditionals.
This is important because (i) it suggests that we might do better (at least in terms of U-PPL) by simply sticking to the conditionals learned by MLM, and (ii) this is not the case for either the HCB or the AG constructions, i.e., if we started with the correct conditionals, HCB and AG's joint would be compatible with the MLM.
Formally,

\begin{proposition} 
Let $w_1, w_2 \in \mathcal{V}$ and further let $p_{1|2}(\cdot \mid w_2), p_{2|1}(\cdot \mid w_1)$ be the true (i.e., population) unary conditional distributions. Define an MRF as
\begin{align*}
    q_{1,2}(w_1, w_2) \propto p_{1|2}(w_1 \mid w_2) \, p_{2|1}(w_2 \mid w_1),
\end{align*}
and let $q_{1|2}(\cdot \mid w_2), q_{2|1}(\cdot \mid w_1)$ be the conditionals derived from the MRF. Then there exists $p_{1|2}, p_{2|1}$ such that 
\begin{align*}
    \kl{p_{1|2}(\cdot \mid w_2)&}{q_{1|2}(\cdot \mid w_2)} > 0.
\end{align*}
\end{proposition}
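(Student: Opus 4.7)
The strategy is to exhibit a concrete counterexample with a two-element vocabulary. Since the unary conditionals of any joint are automatically compatible (they come from that joint), it suffices to take $p_{1|2}, p_{2|1}$ to be the conditionals of some joint $p(w_1, w_2)$, construct the MRF $q$ from them, and show that the derived $q_{1|2}$ disagrees with $p_{1|2}$ in at least one column.

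The key algebraic observation I would lead with is that, since $p_{1|2}(w_1 \mid w_2) = p(w_1, w_2)/p_2(w_2)$ and $p_{2|1}(w_2 \mid w_1) = p(w_1, w_2)/p_1(w_1)$, the MRF construction collapses to
\begin{align*}
q(w_1, w_2) \;\propto\; p_{1|2}(w_1 \mid w_2)\,p_{2|1}(w_2 \mid w_1) \;=\; \frac{p(w_1, w_2)^2}{p_1(w_1)\, p_2(w_2)}.
\end{align*}
So the MRF reproduces the true joint if and only if the ratio $p(w_1,w_2)/[p_1(w_1)p_2(w_2)]$ is constant in $(w_1,w_2)$, i.e., if and only if $w_1$ and $w_2$ are independent under $p$. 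Any joint with nontrivial dependence should therefore give a counterexample.

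To make this concrete, I would take $\mathcal{V} = \{0, 1\}$ and a specific asymmetric joint, e.g., $p(0,0) = 0.4$, $p(0,1) = 0.2$, $p(1,0) = 0.1$, $p(1,1) = 0.3$. Then compute the four unary conditionals explicitly, plug into the formula above, normalize the resulting $q$, and read off $q_{1|2}(0 \mid 0)$ from the first column of $q$. A short arithmetic check shows $q_{1|2}(0 \mid 0) \ne p_{1|2}(0 \mid 0)$, and because the two distributions on $\mathcal{V}$ are both strictly positive, this inequality implies $\mathrm{KL}(p_{1|2}(\cdot \mid 0) \,\Vert\, q_{1|2}(\cdot \mid 0)) > 0$ by the standard strict positivity of KL for unequal distributions.

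The only part that takes mild care is verifying that the inequality $q_{1|2}(\cdot \mid w_2) \ne p_{1|2}(\cdot \mid w_2)$ transfers from the joint-level discrepancy: two different joints could in principle share a conditional if the discrepancy lay entirely in the other marginal. I would address this by doing the numeric check directly for the chosen example rather than arguing abstractly; alternatively one can note that if $q_{1|2} = p_{1|2}$ and $q \ne p$, the column sums $q_{\cdot,w_2}$ (i.e., $q$'s marginal over $w_2$) must differ from $p_2$, and a one-line argument using the symmetric formula then forces independence, contradicting the choice of $p$. Either route closes the proof.
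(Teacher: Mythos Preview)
Your proposal is correct and, at the level of ``give a two-symbol counterexample,'' it matches the paper. The route to the counterexample is different, though, and worth noting. The paper conditions $q$ on $w_2$, simplifies $\kl{p_{1|2}}{q_{1|2}}$ to the Jensen gap $\log \Expect_{w\sim p_{1|2}}[p_{2|1}(w_2\mid w)] - \Expect_{w\sim p_{1|2}}[\log p_{2|1}(w_2\mid w)]$, and then plugs in a specific joint (mass $97/100$ on $(a,a)$, $1/100$ elsewhere) to get a numerical value $\approx 1.27$. Your argument instead passes through the identity $q(w_1,w_2)\propto p(w_1,w_2)^2/[p_1(w_1)p_2(w_2)]$, which is a nice structural observation the paper does not make: it shows in one line that $q=p$ iff $w_1\perp w_2$ under $p$, so \emph{any} dependent joint furnishes a counterexample. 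The trade-off is that the paper's Jensen-gap form gives the KL value directly without having to separately argue that the joint-level discrepancy $q\ne p$ descends to a conditional-level discrepancy $q_{1|2}\ne p_{1|2}$; you handle that step either numerically or via the short ``$q_{1|2}=p_{1|2}$ forces $p_{2|1}(w_2\mid w_1)$ constant in $w_1$'' argument, both of which are fine.
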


\begin{proof}

Let $w_2 \in \calV$ be arbitrary. We then have:
\begin{align*}
    q_{1|2}(w_1 \mid w_2) = \frac{p_{1|2}(w_1 \mid w_2) \, p_{2|1}(w_2 \mid w_1)}{\sum_{w' \in \calV} p_{1|2}(w' \mid w_2) \, p_{2|1}(w_2 \mid w')}
\end{align*}
Now, consider the KL between the true unary conditionals and the MRF unary conditionals:
\begin{align*}
    \kl{p_{1|2}(\cdot \mid w_2)&}{q_{1|2}(\cdot \mid w_2)} = \sum_{w \in \calV} p_{1|2}(w \mid w_2) \log \frac{p_{1|2}(w \mid w_2)}{q_{1|2}(w \mid w_2)} \\
    &= \sum_{w \in \calV} p_{1|2}(w \mid w_2) \log \frac{\sum_{w' \in \calV} p_{1|2}(w' \mid w_2) \, p_{2|1}(w_2 \mid w')}{p_{2|1}(w_2 \mid w)} \\
    &= \log \expect_{w \sim p_{1|2}(\cdot \mid w_2)} [ p_{2|1}(w_2 \mid w) ] - \expect_{w \sim p_{1|2}(\cdot \mid w_2)} [ \log p_{2|1}(w_2 \mid w)]
\end{align*}
This term is the Jensen gap, and in general it can be non-zero.
To see this, suppose $\calV = \{a, b\}$ and consider the joint
\begin{align*}
p_{1,2}(w_1, w_2) =
\begin{cases}
\frac{97}{100}                  & w_1, w_2 = a \\
\frac{1}{100}                  & \text{otherwise}
\end{cases}
\end{align*}
with corresponding conditionals $p_{2|1}(x \mid b) = p_{1|2}(x \mid b) = \frac{1}{2}$ for all $x \in \calV$ and
\begin{align*}
p_{2|1}(x \mid a) = p_{1|2}(x \mid a) = \begin{cases}
    \frac{97}{98}        &  x = a \\
    \frac{1}{98}         &  x = b \\
\end{cases}
\end{align*}
Now, take $w_2 = b$. We then have
\begin{align*}
    \kl{p_{1|2}(&\cdot \mid b)}{q_{1|2}(\cdot \mid b)}  \\
    &= \log \expect_{w \sim p_{1|2}(\cdot \mid b)} [ p_{2|1}(b \mid w) ] - \expect_{w \sim p_{1|2}(\cdot \mid b)} [ \log p_{2|1}(b \mid w)] \\
    &= \log \left( \frac{1}{2}  \left(\frac{1}{98} + \frac{1}{2} \right) \right) - \frac{1}{2} \left( \log \frac{1}{98} + \log \frac{1}{2} \right) \\
    &= \log \left( \frac{1}{196} + \frac{1}{4} \right) - \frac{1}{2} \left( \log \frac{1}{196} \right) \approx 1.27
\end{align*}
which demonstrates that the KL can be non-zero.

\end{proof}

\section{Arnold--Gokhale algorithm}
\label{app:ag-algorithm}

\citet{arnold1998algorithm} study the problem of finding a near-compatible joint from unary conditionals,
and provide and algorithm for the case of $|S| = 2$.
The algorithm initializes the starting pairwise distribution $\qAGiter{1}_{a,b|\cS}(\cdot, \cdot \mid \vwcS)$  to be uniform, and performs the following update until convergence:
\begin{align}
    \qAGiter{t+1}_{a,b|\cS}(w_a, w_b \mid \vwcS) \propto \frac{q_{a|b,\cS}(w_a \mid w_b, \vwcS) + q_{b|a,\cS}(w_b \mid w_a, \vwcS)}{\left(\qAGiter{t}_{a|\cS}(w_a \mid \vwcS)\right)^{-1} + \left(\qAGiter{t}_{b|\cS}(w_b \mid \vwcS)\right)^{-1}}
    \label{eq:iter-update}.
\end{align}

\section{Qualitative example of MRF underperformance}
\label{app:mrf-underperformance}

This example from SNLI qualitatively illustrates a case where both the unary and pairwise perplexities from the MRF underperforms the MLM: ``The \texttt{[MASK]$_1$} \texttt{[MASK]$_2$} at the casino'', where the tokens ``man is'' are masked. In this case, both MRFs assign virtually zero probability mass to the correct tokens, while the MLM assigns orders of magnitude more (around $0.2\%$ of the mass of the joint).
Upon inspection, this arises because $q_{2|1,\cS}(\text{is} \mid \text{man}) \approx 0.02$ and $q_{1|2,\cS}(\text{man} \mid \text{is}) \approx \num{2e-5}$, which makes the numerator of $q^\text{MRF}_{1,2|\cS}(\text{man}, \text{is})$ be $\approx 0$. The MRF could still assign high probability to this pair if the denominator is also $\approx 0$, but in this case we have  $q_{2|1,\cS}(\text{was} \mid \text{man}) \approx 0.33$ and $q_{1|2,\cS}(\text{man} \mid \text{was}) \approx 0.03$, which makes the denominator well above 0. This causes the  completion ``man is'' to have disproportionately little mass in the joint compared other to combinations (``man was'') that were ascribed more mass by \bert's unary conditionals.

\section{Token distance analysis}
\label{app:distance-analysis}

We also explore the effect of the distance between masked tokens on the pairwise negative log-likelihood (PNLL, lower is better; note this is equivalent to the log PPPL) of the joints built using the different approaches we considered.
We considered two different kinds of distance functions between tokens: (i) the absolute difference in the positions between the two masked tokens, and (ii) their syntactic distance (obtained by running a dependency parser on unmasked sentences).

We plot the results in \cref{fig:snli-distance-analysis} (SNLI) and \cref{fig:xsum-distance-analysis} (XSUM).
Note that the black bars denote the number of datapoints with that distance between the two masked tokens, where a syntactic distance of 0 means that the two masked tokens belong to the same word, whereas a token distance of 0 means that the two masked tokens are adjacent.
The graphs indicate that the language modeling performance improvement (compared to using the MLM joint) is most prominent when masked tokens are close together, which is probably because when the masked tokens are close together they are more likely to be dependent. In this case, AG tends to do best, HCB and MRF tend to do similarly, followed by MRF-L and, finally, the conditionally independent MLM, which follows the trends observed in the paper.

\begin{figure}
    \begin{minipage}{0.5\columnwidth}
        \centering
        \includegraphics[width=0.95\textwidth]{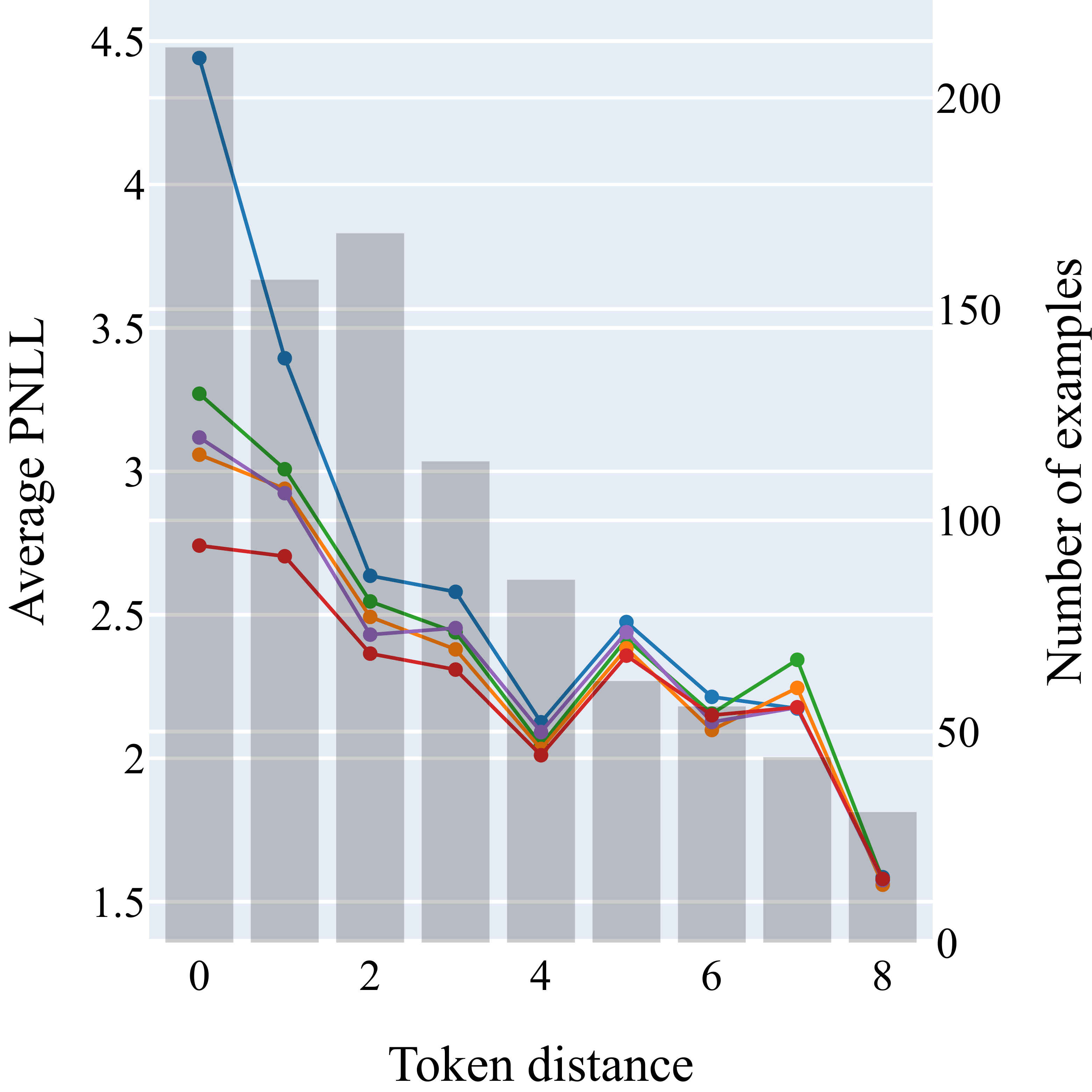}
    \end{minipage}%
    \begin{minipage}{0.5\columnwidth}
        \centering
        \includegraphics[width=0.95\textwidth]{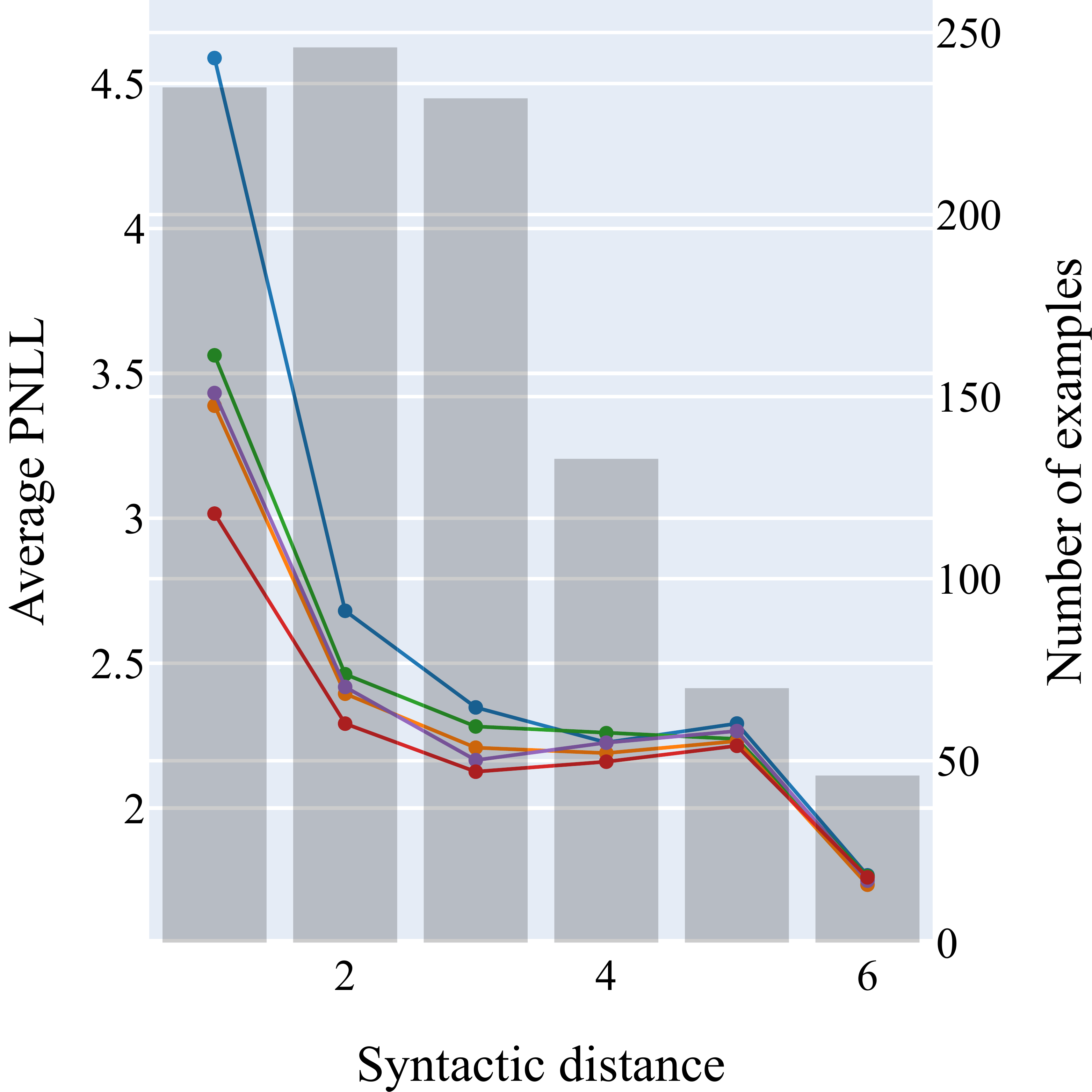}
    \end{minipage}
    \caption{Pairwise NLL (PNLL) as a function of the token and syntactic distance between masked positions for joints built using the methods: \mlmC, \mrfLogitC, \mrfC, \hcbC, \agC on SNLI~\citep{snli}. The gray bars represent the number of examples on the dataset that had that degree of separation.}
    \label{fig:snli-distance-analysis}
\end{figure}

\begin{figure}
    \begin{minipage}{0.5\columnwidth}
        \centering
        \includegraphics[width=0.95\textwidth]{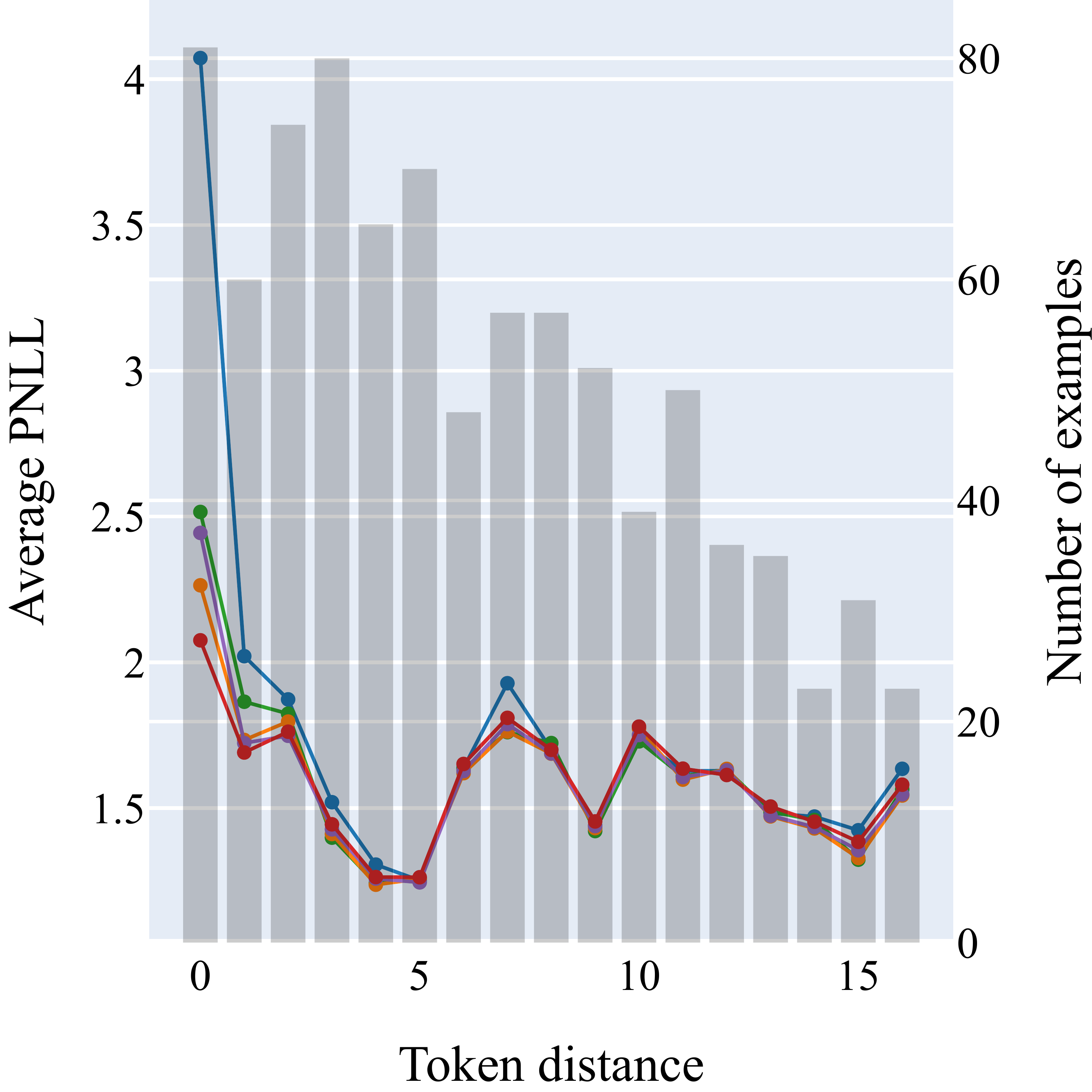}
    \end{minipage}%
    \begin{minipage}{0.5\columnwidth}
        \centering
        \includegraphics[width=0.95\textwidth]{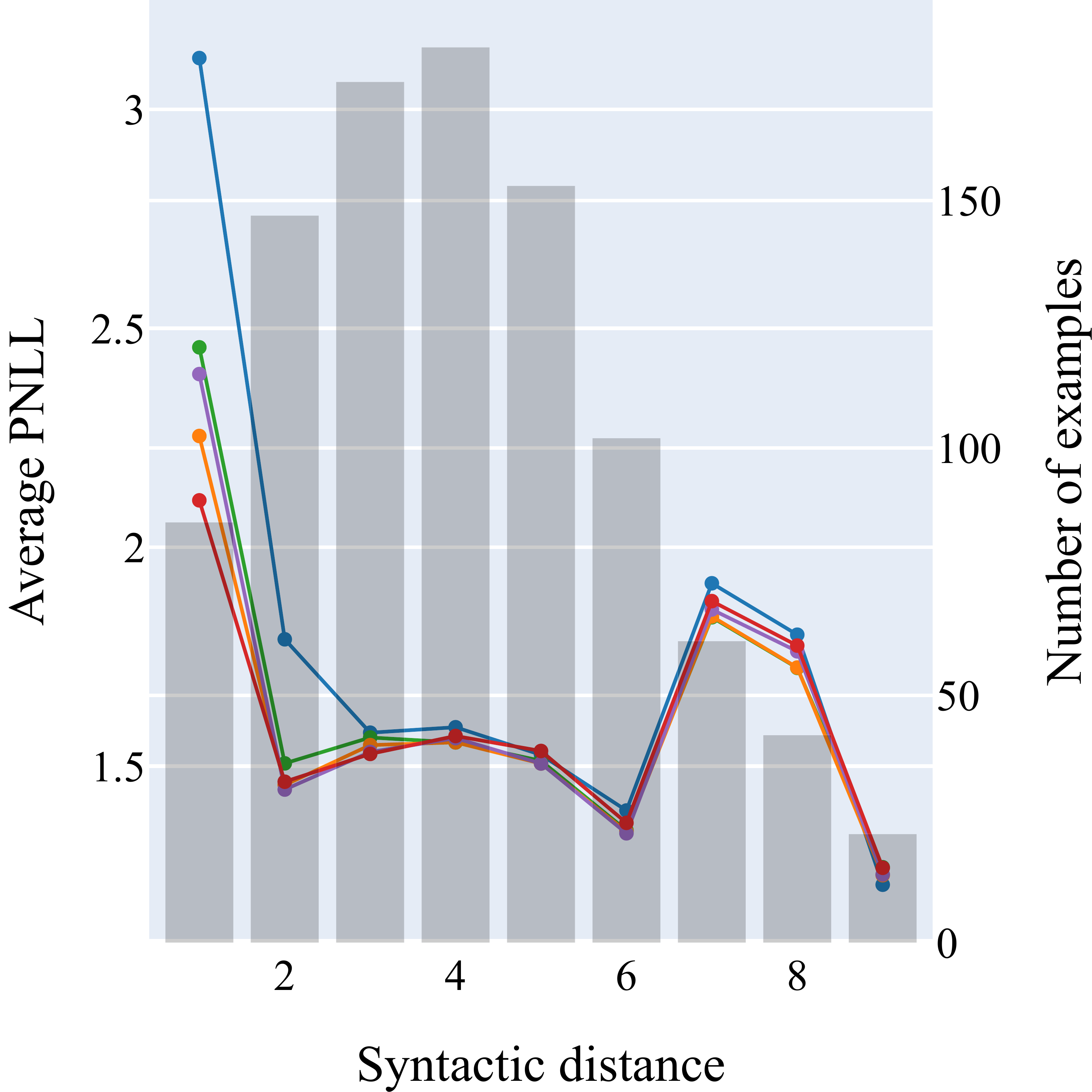}
    \end{minipage}
    \caption{Pairwise NLL (PNLL) as a function of the token and syntactic distance between masked positions for joints built using the methods: \mlmC, \mrfLogitC, \mrfC, \hcbC, \agC on XSUM~\citep{xsum}. The gray bars represent the number of examples on the dataset that had that degree of separation.}
    \label{fig:xsum-distance-analysis}
\end{figure}

\end{document}